\newtheorem{theorem}{Theorem}
\newcommand{\KLD}[2]{D_{\mathrm{KL}} \left( \left. \left. #1 \right|\right| #2 \right) }
\title{Learning Reusable Options for Multi-Task Reinforcement Learning}
\author{
Francisco M. Garcia$^{1,2}$
\and%
Chris Nota$^2$\And
Philip S. Thomas$^2$
\affiliations
$^1$Amazon Alexa\\
$^2$University of Massachusetts Amherst
\emails
\{fmgarcia,cnota,psthomas\}@cs.umass.edu
}
\begin{document}

\maketitle

\begin{abstract}
  Reinforcement learning (RL) has become an increasingly active area of research in recent years. Although there are many algorithms that allow an agent to solve tasks efficiently, they often ignore the possibility that prior experience related to the task at hand might be available. For many practical applications, it might be unfeasible for an agent to learn how to solve a task from scratch, given that it is generally a computationally expensive process; however, prior experience could be leveraged to make these problems tractable in practice. In this paper, we propose a framework for exploiting existing experience by learning reusable \emph{options}. We show that after an agent learns policies for solving a small number of problems, we are able to use the trajectories generated from those policies to learn reusable options that allow an agent to quickly learn how to solve novel and related problems.
\end{abstract}

\section{Introduction}

Reinforcement learning (RL) techniques have experienced much of their success in simulated environments, such as video games \cite{atari} or board games \cite{go,tdgammon}. One of the main reasons why RL has worked so well in these applications is that we are able simulate millions of interactions with the environment in a relatively short period of time, allowing the agent to experience a large number of different situations in the environment and learn the consequences of its actions. 

In many real world applications, however, where the agent interacts with the physical world, it might not be easy to generate such a large number of interactions. The time and cost associated with training such systems could render RL an unfeasible approach for training in large scale.

As a concrete example, consider training a large number of humanoid robots (agents) to move quickly, as in the Robocup competition \cite{peter_stone_run}. Although the agents have similar dynamics, subtle variations mean that a single policy shared across all agents would not be an effective solution. Furthermore, learning a policy from scratch for each agent is too data-inefficient to be practical. 
As shown by Farchy et al. (2013), this type of problem can be addressed by leveraging the experience obtained from solving a related task (e.g., walking) to quickly learn a policy for each individual agent that is tailored to a new task (e.g., running).

The situation where agents might need to solve many related, but unique, tasks also occurs in industry; an example would be robots (agents) tasked with sorting items in fulfillment centers. A simple approach, like using PD controllers, would fail to adapt to the forces generated from picking up objects with different weight distributions, causing the agent to drop the objects. RL is able to mitigate this problem by learning a policy for each agent that is able to make corrections quickly, which is tailored to the robot's dynamics. However, training a new policy for each agent would be far too costly to be a practical solution.

In these scenarios, it is possible to use a small number of policies learned by a subset of the agents, and then leverage the experience obtained from learning those policies to allow the remaining agents to quickly learn their corresponding policies. This approach can turn problems that are prohibitively expensive to solve into relatively simple problems.

To make use of prior experience and improve learning on new related problems in RL, several lines of work, which are complementary to each other, have been proposed and are actively being studied. \emph{Transfer learning}, \cite{tl_rl} refers to the problem of adapting information acquired while solving one task to another. One might consider learning a mapping function that allows for a policy learned in \emph{one} task to be used in a different task, \cite{quadrotor}, or simply learn a mapping of the value function learned in one task to another, \cite{qtransfer}. These techniques can be quite effective, but are also limited in that they consider mapping information from a \emph{source} task to a \emph{target} task; that is, they do not learn a general transfer strategy for many related tasks.

Another approach to reusing prior knowledge is through \emph{meta learning} or learning to learn \cite{schmid1,schmid2}. In the context of RL, the goal under this framework is usually for an agent to be exposed to a number of tasks where it can learn some general behavior that generalizes to new tasks. For example, Finn et al. (2017), showed that an agent who learns how to walk forward is able to find a general policy that can quickly be adapted to learn to walk backwards \cite{maml}.   

One last technique to leverage prior experience, and the one this paper focuses on, is through temporally extended actions or \emph{temporal abstractions} \cite{macro_tech,smdp}. While in the standard RL framework the agent has access to a set of primitive actions (i.e., actions that last for one time step), temporally extended actions allow an agent to execute actions that last for several time-steps. They introduce a bias in the behavior of the agent which, if appropriate for the problem at hand, results in dramatic improvements in how quickly the agent learns to solve a new task compared to only using primitive actions \cite{macro_tech}. 

A popular representation for temporally extended actions is the \emph{options} framework (formally introduced in the next section), which is the focus of this work. It has been shown that options learned in a specific task or set of tasks, can be reused to improve learning on new tasks \cite{eigen_option,option_critic}; however, this often requires knowledge from the user about which options or how many options are appropriate for the type of problems the agent will face.

In this paper, we propose learning reusable options for a set of related tasks with minimal information provided by the user. We consider the scenario where the agent must solve a large numbers of tasks and show that after learning a well-performing policy for a small number of problems, we can learn an appropriate number of options that facilitates learning in a remaining set of tasks. Ideally the trajectories used to learn options would be obtained from optimal policies, but for many learning algorithms it cannot be guaranteed that the learned policies are actually optimal.
We propose learning a set of options that minimize the expected number of decisions needed to represent trajectories generated from the policies learned by the agent for a small number of problems, while also maximizing the probability of generating those trajectories. Our experiments show that after learning to solve a small number of tasks, the learned options allow the agent to much more quickly solve the remaining tasks.

\section{Background and Notation}
\label{sec:background}

A \textit{Markov decision process} (MDP) is a tuple, $M = (\mathcal S, \mathcal A,P,R, \gamma, d_0)$, where $\mathcal S$ is the set of possible states of the environment, $\mathcal A$ is the set of possible actions that the agent can take, $P(s,a,s')$ is the probability that the environment will transition to state $s'\in \mathcal S$ if the agent executes action $a \in \mathcal A$ in state $s \in \mathcal S$, $R(s,a,s')$ is the expected reward received after taking action $a$ in state $s$ and transitioning to state $s'$, $d_0$ is the initial state distribution, and $\gamma \in [0,1]$ is a discount factor for rewards received in the future. We use $t$ to index the time-step and write $S_t$, $A_t$, and $R_t$ to denote the state, action, and reward at time $t$. A \textit{policy}, $\pi: \mathcal S \times \mathcal A \to [0,1]$, provides a conditional distribution over actions given each possible state: $\pi(s,a)=\Pr(A_t=a|S_t=s)$. We denote a trajectory of length $t$ as $h_t = (s_0,a_0,r_0, \dots, s_{t-1},a_{t-1},r_{t-1},s_t)$, that is, $h_t$ is defined as a sequence of states, actions and rewards observed after following some policy for $t$ time-steps. 
%

This work focuses on learning \emph{temporally extended} actions---actions lasting for multiple time-steps---that can be used for a set of related tasks. We consider the setting where an agent must solve a set of related tasks, where each task is an MDP, $M = (\mathcal S, \mathcal A, P_M , R_M, \gamma, d^M_0)$; that is, each task is an MDP with its own transition function, reward function and initial state distribution, with shared state and action sets. Specifically, our work focuses on learning reusable \emph{options} \cite{Sutton98intra-optionlearning,smdp} for a set of related tasks.

An option, $o=(\mathcal I_o, \mu_o, \beta_o)$, is a tuple in which $\mathcal I_o \subseteq \mathcal S$ is the set of states in which option $o$ can be executed (the \emph{initiation set}), $\mu_o$ is a policy that governs the behavior of the agent while executing $o$, and $\beta_o: \mathcal S \to [0,1]$ is a termination function that determines the probability that $o$ terminates in a given state. We assume that $\mathcal I_o = \mathcal S$ for all options $o$; that is, the options are available at every state. 
The options framework does not dictate how an agent should choose between available options or how options should be discovered.
A common approach to selecting between options is to a learn a \emph{policy over options}, which is defined by the probability of choosing an option in a particular state.
Two recent popular approaches to option discovery are eigenoptions \cite{eigen_option} and the option-critic architecture \cite{option_critic}.

The \emph{eigenoptions} \cite{eigen_option} of an MDP are the optimal policies for a set of implicitly defined reward functions called \emph{eigenpurposes}.
Eigenpurposes are defined in terms of \emph{proto-value functions} \cite{pvfs}, which are in turn derived from the eigenvectors of a modified adjacency matrix over states for the MDP.
The intuition is that no matter the true reward function, the eigenoptions allow an agent to quickly traverse the transition graph, resulting in better exploration of the state space and faster learning. However, there are two major downsides: 1) the adjacency matrix is often not known \emph{a priori}, and may be difficult or impossible to construct for large MDPs, and 2) for each eigenpurpose, constructing the corresponding eigenoption requires solving a new MDP.

The option-critic architecture \cite{option_critic} is a more direct approach that learns options and a policy over options simultaneously.
The option policies and their termination functions are trained using policy gradient methods, while the policy over options may be trained using any technique.
One issue that often arises within this framework is that the termination functions of the learned options tend to collapse to ``always terminate''.
In a later publication, the authors built on this work to consider the case where there is a cost associated with switching options \cite{option_critic_penalty}. This method resulted in the agent learning to use a single option while it was appropriate and terminate when an option switch was needed, allowing it to discover improved policies for a particular task. 
The authors argue that minimizing the use of the policy over options may be desirable, as the cost of choosing an option may be greater than the cost of choosing a primitive action when using an option---e.g., when a planner is used to select an option.
Work recently presented by Harutyunyan et al. (2019) approaches the aforementioned termination problem by explicitly optimizing the termination function of options to focus on small regions of the state space.  
However, while all of these methods can be effective in learning a policy for the task at hand, they do not explicitly take into consideration that the agent might face related, but different, tasks in the future. In contrast, our method discovers options that are useful for a variety tasks.

We build on the idea that minimizing the number of decisions an agent must make will lead to the discovery of generally useful temporal abstractions, and propose an offline technique where options are learned after solving a small number of tasks. The options can then be leveraged to quickly solve new related problems the agent will face in the future. We use the trajectories generated by the agent when learning policies for a small number of problems, and learn an appropriate set of options by directly minimizing the expected number of decisions the agent makes while simultaneously maximizing the probability of generating the observed trajectories. 

\section{Learning Reusable Options from Experience}
\label{sec:problem_statement}

In this section, we formally introduce the objective we use to learn a set of options that are reusable for a set of related tasks. Our algorithm introduces one option at a time until introducing a new option does not improve the objective further. This procedure results in a natural way of learning an adequate number of options without having to pre-define it; a new option is included only if it is able to improve the probability of generating optimal behavior while minimizing the number of decisions made by the agent.

\subsection{Problem Formulation}

In the options framework, at each time-step, $t$, the agent chooses an action, $A_t$, based on the current option, $O_t$. Let $T_t$ be a Bernoulli random variable, where $T_t = 1$ if the previous option, $O_{t-1}$, terminated at time $t$, and $T_t = 0$ otherwise.
If $T_t = 1$, $O_t$ is chosen using the policy over options, $\pi$.
If $T_t = 0$, then the previous option continues, that is, $O_t = O_{t-1}$.
To ensure that we can represent any trajectory, we consider primitive actions to be options which always select one specific action and then terminate; that is, for an option, $o$, corresponding to a primitive, $a$, for all $s \in \mathcal S$, the termination function would be given by $\beta_o(s)=1$, and the policy by $\mu(s,a')=1$ if $a'=a$ and $0$ otherwise.

Let $\mathcal O = \mathcal{O_A} \cup \mathcal{O_O}$ denote a set of options, $\{o_1,\dots,o_n\}$, where $\mathcal{O_A}$ refers to the set of options corresponding to primitive actions and $\mathcal{O_O}$ to the set of options corresponding to temporal abstractions. Furthermore, let $H$ be a random variable denoting a trajectory of length $|H|$ generated by a well-performing policy, and let $H_t$ be a random variable denoting the sub-trajectory of $H$ up to the state encountered at time-step $t$. 
We seek to find a set, $\mathcal O^* = \{o_1^*, \dots, o_n^*$\}, that maximizes the following objective: 

\small
\begin{equation}
\begin{aligned}
J(\pi, \mathcal{O}) = \mathbf{E}\big[ \sum_{t=1}^{|H|} \Pr(T_t=0,H_t|\pi, \mathcal{O}) + \lambda_1 g(H,\mathcal{O_O}) \big], 
\label{eq:objective}
\end{aligned}
\end{equation}
\normalsize

\noindent where $g(h, \mathcal{O_O})$ is a regularizer that encourages a diverse set of options, and $\lambda_1$ is a scalar hyper-parameter.
If we are also free to learn the parameters of $\pi$, then $\mathcal{O}^* \in \underset{\mathcal{O}}{\arg\max} \; \underset{\pi}{\max} \; J(\pi, \mathcal{O})$. 

One choice for $g$ is the average KL divergence on a given trajectory over the set of $m$ options being learned: \linebreak $g(h, \mathcal{O_O}) = \frac{2}{m(m-1)} \sum_{o,o' \in \mathcal{O_O}} \sum_{t=0}^{|h|-1}\KLD{\mu_o(s_t)}{\mu_{o'}(s_t)}$. Note that this term is only defined when we consider two or more options. When that is not the case we set this term to 0.

Intuitively, we seek to find options that terminate as infrequently as possible while still generating well-performing trajectories with high probability.
Notice that minimizing the number of terminations is the same as minimizing the number of decisions made by the policy over options, as each termination requires the policy to subsequently choose a new option.
Given a set of options, a policy over options, and a sample trajectory, we can calculate the joint probability for a trajectory \emph{exactly}. Therefore, we can obtain an accurate estimate of Equation \ref{eq:objective} by averaging over a \emph{set} of sample trajectories. In the next section, we present a slight modification to our objective that results in a practical optimization problem.

\subsection{Optimization Objective for Learning Options}

Given that the agent must learn the corresponding policy for a set of tasks, we can use the experienced gathered from solving a subset of tasks to obtain trajectories demonstrating the optimal behavior learned for these problems. Given a set, $\mathcal H$, of trajectories generated from an initial subset of tasks, we can now estimate the expectation in \eqref{eq:objective} to learn options that can be leveraged in the remaining problems. 

Because the probability of generating any trajectory approaches $0$ as the length of the trajectory increases, we make a slight modification to the original objective that leads to better numerical stability. We explain these modifications after introducing the objective $\hat{J}$, which we optimize in practice:

\begin{equation}
\begin{aligned}
\hat{J}(\pi, \mathcal{O}, \mathcal{H}) =& \frac{1}{\mathcal{H}} \sum_{h \in \mathcal H} \big( \underbrace{\lambda_2 \Pr(H=h |\pi, \mathcal{O})}_\text{probability of generating $h$} \\
&- \underbrace{\frac{ \sum_{t=1}^{|h|} \mathbf{E} \left[ T_t=1 \middle | H_t=h_t, \pi, \mathcal{O} \right] }{|h|}}_\text{expected number of terminations} \\
&+ \underbrace{\lambda_1 g(h, \mathcal{O_O})\big)}_\text{encourage diverse options}.
\label{eq:obj-approx}
\end{aligned}
\end{equation}

The objective in \eqref{eq:obj-approx} is derived from $J$ with the following modifications: 1) the sum of the two first terms replaces a product of two terms obtained from computing the joint probability $\Pr(T_t=0,H)$ in $J$,  2) the summation over terminations for a trajectory (second term) is normalized by the length of the trajectory, and 3) we introduce a scalar weight $\lambda_2$ to balance the contribution of each term to $\hat J$. Although this is not an unbiased estimator of $J$, we derived $\hat J$ from $J$ with the introduction of some modifications for numerical stability. A more detailed discussion on how we arrived to this objective is provided in Appendix A.

We can express \eqref{eq:obj-approx} entirely in terms of the policy over options $\pi$, options $\mathcal{O} = \{o_1,\dots,o_n\}$ and the transition function, $P$. When the transition function is not known, we can estimate it from data by assuming a family of distributions and fitting the parameters.
The following theorems show how to calculate the first two terms in \eqref{eq:obj-approx} from known quantities, allowing us to efficiently maximize the proposed objective.

\begin{theorem}
Given a set of options, $\mathcal{O}$, and a policy, $\pi$, over options, the expected number of terminations for a trajectory $h$ is given by:

\small
\begin{equation}
\begin{aligned}
\sum_{t=1}^{|h|} \mathbf{E} \bigg[ T_t=1  \bigg| H_t=h_t, \pi, \mathcal{O} \bigg] =& \sum_{t=1}^{|h|} \sum_{o \in \mathcal O} \beta_o(s_t)  \\
    &\times \frac{\alpha_{t-1}(o)}{ \sum_{o' \in \mathcal O} \alpha_{t-1}(o')} \bigg) , 
\end{aligned}
\end{equation}
\normalsize

\noindent where we use $\alpha_{t}(o)$ as a shorthand notation for $\mu_o(s_{t},a_{t})  \Pr(O_{t}=o|H_{t}=h_{t},\pi, \mathcal{O})$,

\begin{equation}
\begin{aligned}
\Pr(O_{t}=o|H_{t}=h_{t},\pi, \mathcal{O}) =& \bigg[ \bigg( \pi(s_{t}, o) \beta_o(s_{t}) \bigg) \\
&+  \bigg( P(s_{t-1},a_{t-1},s_{t}) \\ 
&\times \alpha_{t-1}(o) (1-\beta_o(s_{t-1})) \bigg) \bigg], \nonumber
\end{aligned}
\end{equation}

\noindent and $\Pr(O_0=o|H_0=h_0,\pi, \mathcal{O}) = \pi(s_0,o)$.
\end{theorem}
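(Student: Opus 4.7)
The plan is to treat the process as a hidden Markov model in which the ``hidden state'' at time $t$ is the currently executing option $O_t$, the ``observations'' are the states, actions, and rewards of the trajectory, and the termination indicator $T_t$ is a derived Bernoulli variable governed by $\beta_{O_{t-1}}(S_t)$. Once the problem is framed this way, the identity is a standard filtering computation: express $\Pr(T_t=1 \mid H_t)$ by marginalizing over $O_{t-1}$, then use Bayes' rule to update the posterior over options after each new observation. I would first observe that because $T_t$ is Bernoulli, $\mathbf{E}[T_t=1 \mid H_t]=\Pr(T_t=1 \mid H_t)$, and that conditional on $O_{t-1}=o$ and $S_t=s_t$ the termination probability is exactly $\beta_o(s_t)$ and is independent of the rest of $H_t$.

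The second step is to compute the posterior $\Pr(O_{t-1}=o \mid H_t)$. Here I would use two conditional independences implied by the options generative model: (i) given $O_{t-1}$ and $S_{t-1}$, the action $A_{t-1}$ is distributed as $\mu_{O_{t-1}}(S_{t-1},\cdot)$, and (ii) $S_t$ depends only on $(S_{t-1},A_{t-1})$ and not directly on $O_{t-1}$. The second fact implies $\Pr(O_{t-1}=o \mid H_t)=\Pr(O_{t-1}=o \mid H_{t-1},A_{t-1})$, and applying Bayes' rule to the action likelihood gives
\begin{equation*}
\Pr(O_{t-1}=o \mid H_{t-1},A_{t-1})=\frac{\mu_o(s_{t-1},a_{t-1})\,\Pr(O_{t-1}=o\mid H_{t-1})}{\sum_{o'}\mu_{o'}(s_{t-1},a_{t-1})\,\Pr(O_{t-1}=o'\mid H_{t-1})}=\frac{\alpha_{t-1}(o)}{\sum_{o'}\alpha_{t-1}(o')}.
\end{equation*}
Substituting into the law of total probability and summing over $t$ yields the claimed formula for the expected number of terminations.

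To establish the stated recursion for $\Pr(O_t=o \mid H_t)$, I would do a case split on $T_t$. With probability $\sum_{o'}\beta_{o'}(s_t)\Pr(O_{t-1}=o'\mid H_t)$ the previous option terminates and $O_t$ is drawn fresh from $\pi(s_t,\cdot)$; otherwise $O_t=O_{t-1}$ with ``survival'' factor $1-\beta_o(s_t)$. Combining the two cases (using again the posterior computed in the previous paragraph) gives a forward-filtering update of the same flavor as the formula stated in the theorem; any constant factor such as $P(s_{t-1},a_{t-1},s_t)$ that appears in an unnormalized joint-probability version of the update cancels in the normalized ratio $\alpha_{t-1}(o)/\sum_{o'}\alpha_{t-1}(o')$ used in the main equation. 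The base case $\Pr(O_0=o\mid H_0)=\pi(s_0,o)$ is immediate from how the first option is selected at $s_0$.

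The main obstacle I expect is bookkeeping rather than ideas: keeping precise track of the timing of option selection versus termination (termination is evaluated upon entering the next state, and a new option is drawn only if the current one terminates), and correctly identifying the conditional independences that let the trajectory $H_t$ be summarized by the one-step sufficient statistics $\alpha_{t-1}(\cdot)$. As long as these are stated carefully at the outset, the rest of the argument is a mechanical application of Bayes' rule and total probability.
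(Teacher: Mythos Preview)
Your proposal is correct and follows essentially the same route as the paper's proof: both observe that $\mathbf{E}[T_t\mid H_t]=\Pr(T_t=1\mid H_t)$, marginalize over $O_{t-1}$ to introduce $\beta_o(s_t)$, use the conditional independence of $S_t$ from $O_{t-1}$ given $(S_{t-1},A_{t-1})$ to reduce to $\Pr(O_{t-1}=o\mid H_{t-1},A_{t-1})$, apply Bayes' rule on the action likelihood to obtain the $\alpha_{t-1}$ ratio, and finally derive the recursion for $\Pr(O_t=o\mid H_t)$ by splitting on $T_t$. Your HMM/filtering framing is a clean conceptual wrapper, and your remark that spurious factors like $P(s_{t-1},a_{t-1},s_t)$ cancel in the normalized ratio is a useful observation about the form of the stated recursion, but the underlying argument is the same as the paper's.
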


\begin{proof}
See Appendix B.
\end{proof}

\begin{theorem}
Given a set of options $\mathcal{O}$ and a policy $\pi$ over options, the probability of generating a trajectory $h$ of length $|h|$ is given by:

\vspace{-5mm}
\begin{equation}
\begin{aligned}
\Pr(H_{|h|}=h_{|h|} |\pi, \mathcal{O}) =&  d_0(s_0) \big[ \sum_{o \in \mathcal{O}} \pi(s_0,o) \\ \nonumber
&\times \mu_o(s_0,a_0) f(h_{|h|},o,1) \big] \\ \nonumber
&\times \prod_{k=0}^{|h|-1} P(s_k,a_k,s_{k+1}), \nonumber
\end{aligned}
\end{equation}
 where $f$ is a recursive function defined as:

\small
\[
  f(h_t, o, i) = 
  \begin{cases}
    1, & \text{if } i = t \\
    \bigg[ \bigg( \beta_o(s_i) \sum_{o' \in \mathcal{O}} \pi(s_{i+1},o') \\ \times \mu_{o'}(s_{i+1}, a_{i+1}) f(h_t,o',i+1) \bigg)  \\
    + \bigg( (1-\beta_o(s_i)) \mu_o(s_{i+1},a_{i+1}) \\ 
    \times f(h_t,o,i+1) \bigg) \bigg] & \text{otherwise}
  \end{cases}
\]
\normalsize

\end{theorem}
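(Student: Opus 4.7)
The plan is to expand $\Pr(H_{|h|}=h_{|h|}\mid \pi,\mathcal{O})$ by marginalizing over the latent option sequence $O_0,O_1,\dots,O_{|h|-1}$, since the trajectory $h$ only records states, actions, and rewards but not the active option at each step. I would begin by factoring the joint probability of $(H_{|h|}=h_{|h|}, O_{0:|h|-1})$ using the generative process: sample $s_0\sim d_0$, pick $o_0\sim \pi(s_0,\cdot)$, pick $a_0\sim \mu_{o_0}(s_0,\cdot)$, sample $s_1\sim P(s_0,a_0,\cdot)$, and then at each subsequent step $i$ decide (i) whether the option terminates with probability $\beta_{o_{i-1}}(s_i)$, in which case $o_i\sim \pi(s_i,\cdot)$, or it continues with probability $1-\beta_{o_{i-1}}(s_i)$, in which case $o_i=o_{i-1}$, and (ii) emit $a_i\sim \mu_{o_i}(s_i,\cdot)$ and transition $s_{i+1}\sim P(s_i,a_i,\cdot)$.

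Because the environment transitions $P(s_k,a_k,s_{k+1})$ do not depend on the option or on $\pi$, they factor out cleanly, yielding the product $\prod_{k=0}^{|h|-1} P(s_k,a_k,s_{k+1})$ that appears on the right-hand side. Similarly, $d_0(s_0)$ factors out, as does the first step's option-and-action block $\sum_{o\in\mathcal{O}} \pi(s_0,o)\,\mu_o(s_0,a_0)\,(\cdot)$, matching the prefix of the theorem's expression. What remains inside that sum is precisely $f(h_{|h|},o,1)$, interpreted as the probability of emitting the remainder of the action sequence $a_1,\dots,a_{|h|-1}$ conditional on the observed states, given that option $o$ is active entering step $1$.

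I would then prove by backward induction on $i$ (from $i=|h|-1$ down to $i=1$) that $f(h_{|h|},o,i)$ equals the conditional probability $\Pr(A_{i+1}=a_{i+1},\dots,A_{|h|-1}=a_{|h|-1}\mid s_0,\dots,s_{|h|-1},O_i=o,\pi,\mathcal{O})$ times any required continuation factors. The base case $i=t$ (i.e., $i=|h|$) is the empty product, giving $1$. For the inductive step, condition on the termination variable $T_{i+1}$: with probability $\beta_o(s_i)$ the option terminates, a new option $o'$ is drawn via $\pi(s_{i+1},\cdot)$, emits $a_{i+1}$ with probability $\mu_{o'}(s_{i+1},a_{i+1})$, and we recurse on $f(h_t,o',i+1)$; with probability $1-\beta_o(s_i)$ the same option continues, emits $a_{i+1}$ with probability $\mu_o(s_{i+1},a_{i+1})$, and we recurse on $f(h_t,o,i+1)$. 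Summing these two cases reproduces exactly the two-term expression in the definition of $f$.

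The main obstacle is keeping the bookkeeping of the termination variable aligned with the correct step: the convention here is that $T_{i+1}$ uses $\beta_{o_i}(s_{i+1})$, but the recursive expression evaluates $\beta_o(s_i)$ at index $i$, so one must carefully reindex to verify consistency with the termination semantics used in Theorem~1. Once that indexing is reconciled, combining the induction with the factored initial step and the transition product gives the stated formula.
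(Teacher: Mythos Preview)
Your proposal is correct and mirrors the paper's argument: factor out $d_0(s_0)$ and the environment transitions, marginalize over the initial option to isolate $\sum_{o}\pi(s_0,o)\mu_o(s_0,a_0)\,f(\cdot)$, and obtain the recursion for $f$ by conditioning on the Bernoulli termination event at each step, yielding the two branches (terminate and resample via $\pi$, or continue with the same $o$). Your caution about the $\beta_o(s_i)$ versus $s_{i+1}$ indexing is warranted---the paper's own appendix derivation in fact produces $\pi(s_i,o')\mu_{o'}(s_i,a_i)$ inside the recursive step rather than the $s_{i+1},a_{i+1}$ appearing in the stated $f$, so the off-by-one tension you flagged is a real notational artifact that you should resolve explicitly when writing up.
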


\begin{proof}
See Appendix C.
\end{proof}

Given a parametric representation of the option policies and termination functions for each $o \in \mathcal{O}$ and for the policy $\pi$ over options, we use Theorems $1$ and $2$ to differentiate the objective in \eqref{eq:obj-approx} with respect to their parameters and optimize with any standard optimization technique.

\subsection{Learning Options Incrementally}
\label{sec:problem_statement_algo}

One common issue in option discovery is identifying how many options are needed for a given problem. Oftentimes this number is predefined by the user based on intuition. In such a scenario, one could learn options by simply randomly initializing the parameters of a number of options and optimizing the proposed objective in \eqref{eq:obj-approx}. Instead, we propose not only learning options, but also the number of options needed, by the procedure shown in Algorithm \ref{algo-framework}. This algorithm introduces one option at a time and optimizes the objective $\hat J$ with respect to the policy over options $\pi_{\theta}$, with parameters $\theta$, and the newly introduced option, $o'=(\mu'_{\phi},\beta'_{\psi})$, with parameters $\phi$ and $\psi$, for $N$ epochs. Optimizing both $o'$ and $\pi_{\theta}$ allows us to estimate how much we can improve $\hat J$ given that we keep any previously introduced option fixed.

 After the new option is trained, we measure how much $\hat J$ has improved; if it fails to improve above some threshold, $\Delta$, the procedure terminates. This results in a natural way of obtaining an appropriate number of options, as options stop being added once a new option no longer improves the ability to represent the demonstrated behavior.

\begin{algorithm}[H]
\caption{Option Learning Framework - Pseudocode}
\label{algo-framework}
\begin{algorithmic}[1]

\STATE Collect set of trajectories $\mathcal H$

\STATE Initialize option set $\mathcal O$ with primitive options
\STATE done = false
\STATE $\hat J_{prev} = -\infty$
\WHILE{done == false}
    \STATE Initialize new option $o' = (\mu'_{\phi}, \beta'_{\psi})$, initializing parameters for $\phi$ and $\psi$.
    \STATE $\mathcal O' = \mathcal{O} \cup o'$
    \STATE Initialize parameters $\theta$ of policy $\pi_{\theta}$
    \FOR{k=1,\dots,N}
        \STATE $\hat J_k = \hat J(\pi_{\theta}, \mathcal{O'}, \mathcal{H})$

        \STATE $\theta = \theta + \alpha \frac{\partial \hat 
        J_k}{\partial \theta}$
        
        \STATE $\phi = \phi + \alpha \frac{\partial \hat J_k}{\partial \phi}$
        
        \STATE $\psi = \psi + \alpha \frac{\partial \hat J_k}{\partial \psi}$
        
    \ENDFOR
    
    \IF{$\hat J_N - \hat J_{prev} < \Delta$}
        \STATE done = true
    \ELSE
        \STATE $\mathcal{O} = \mathcal{O'}$
        \STATE $\hat J_{prev} = \hat J_N$
    \ENDIF
\ENDWHILE

\STATE Return new option set $\mathcal{O}$
\end{algorithmic}
\end{algorithm}

\section{Experimental Results}
\label{sec:experiments}

\begin{figure*}[t!]
    \centering

    \begin{subfigure}[t]{0.47\textwidth}
        \centering
        \includegraphics[height=5cm, width=7.0cm]{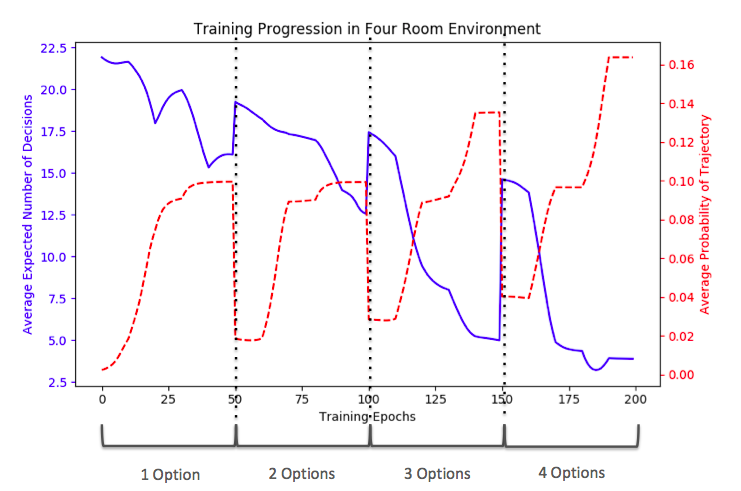}
        \caption{Visualization of loss for sampled trajectories over 200 training epochs. Every 50 training epochs a new option is introduced. For a given set of sampled trajectories, the decreasing average number of decisions made by $\pi$ is shown in blue and the increasing probability of generating the observed trajectories is shown in red.}
        \label{fig:option-loss}
    \end{subfigure}%
    \hspace{1mm} 
    \begin{subfigure}[t]{0.5\textwidth}
        \centering
        \includegraphics[height=5cm, width=7.9cm]{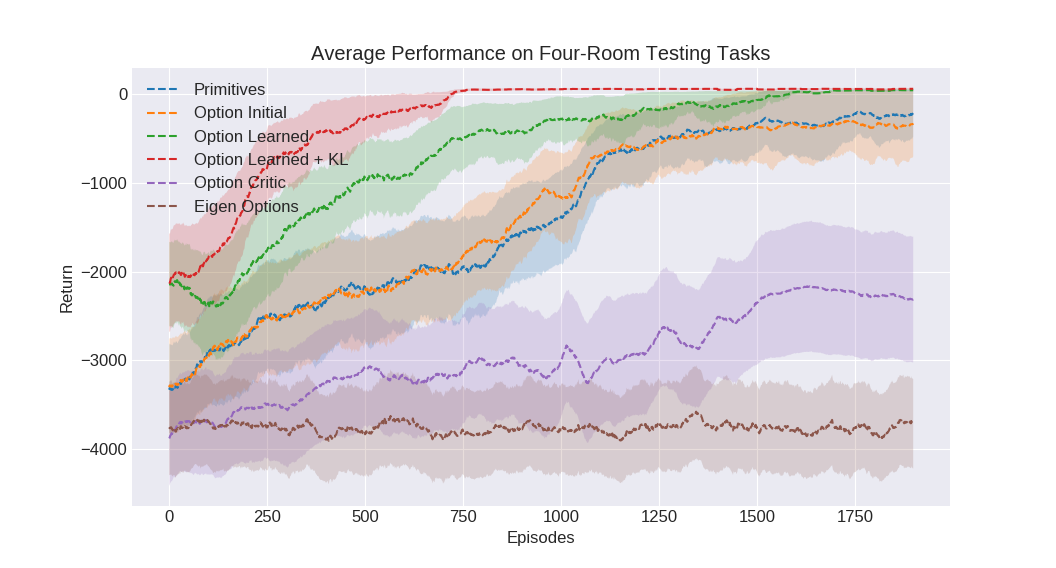}
        \caption{
        Performance Comparison on four rooms domain. Six tasks were used for training and $24$ different tasks for testing. The plot shows the average return (and standard error) on the y-axis as a function of the episode number on the test tasks. For our proposed method we set $\lambda_2 = 100.0$ and $\lambda_1 = 0.001$ when using KL regularization.}
        \label{fig:test-results}
    \end{subfigure}
    \caption{Results on four-room environment showing the improvement of the training loss (left), and learning curves on test problems (right).}
    \label{fig:maze-results}
    \vspace{-3mm}
\end{figure*}

This section describes experiments used to evaluate the proposed offline option learning approach. We show results in the ``four rooms" domain to allow us to visualize and understand the options produced by the approach, and to show empirically that these options produce a clear improvement in learning. We compare against options generated by the option-critic architecture \cite{option_critic} and eigenoptions \cite{eigen_option}. We then extend our experiments to assess the performance of the technique in a few selected problems from the Atari 2600 emulator provided by OpenAI Gym \cite{gym}. These experiments demonstrate that when an agent faces a large number of related tasks, by using the trajectories obtained from solving a small subset of tasks, our approach is able to discover options that significantly improve the learning ability of the agent in the tasks it has yet to solve.

\subsection{Experiments on Four Rooms Environment}

We tested our approach in the four rooms domain: a gridworld of size $40\times40$, in which the agent is placed in a randomly selected start state and needs to reach a randomly selected goal state. At each time-step, the agent executes one of four available actions: moving left, right, up or down, and receives a reward of $-1$. After taking a particular action, the agent moves in the intended direction with probability $0.9$ and in any other direction  with probability $0.1$. Upon reaching the goal state, the agent receives a reward of $+10$. 
We generated $30$ different task variations (by changing the goal and start location) and collected six sample trajectories from optimal policies, learned using Q-learning, from six different start and goal configurations. We evaluated our method on the remaining $24$ tasks.

Each option was represented as a two-layer neural network, with $32$ neurons in each layer, and two output layers: a softmax output layer over the four possible actions representing $\mu$, and a separate sigmoid layer representing $\beta$. We used the tabular form of Q-learning as the learning algorithm with $\epsilon$-greedy exploration.

\begin{figure*}[h]
    \centering
    \includegraphics[width=\linewidth]{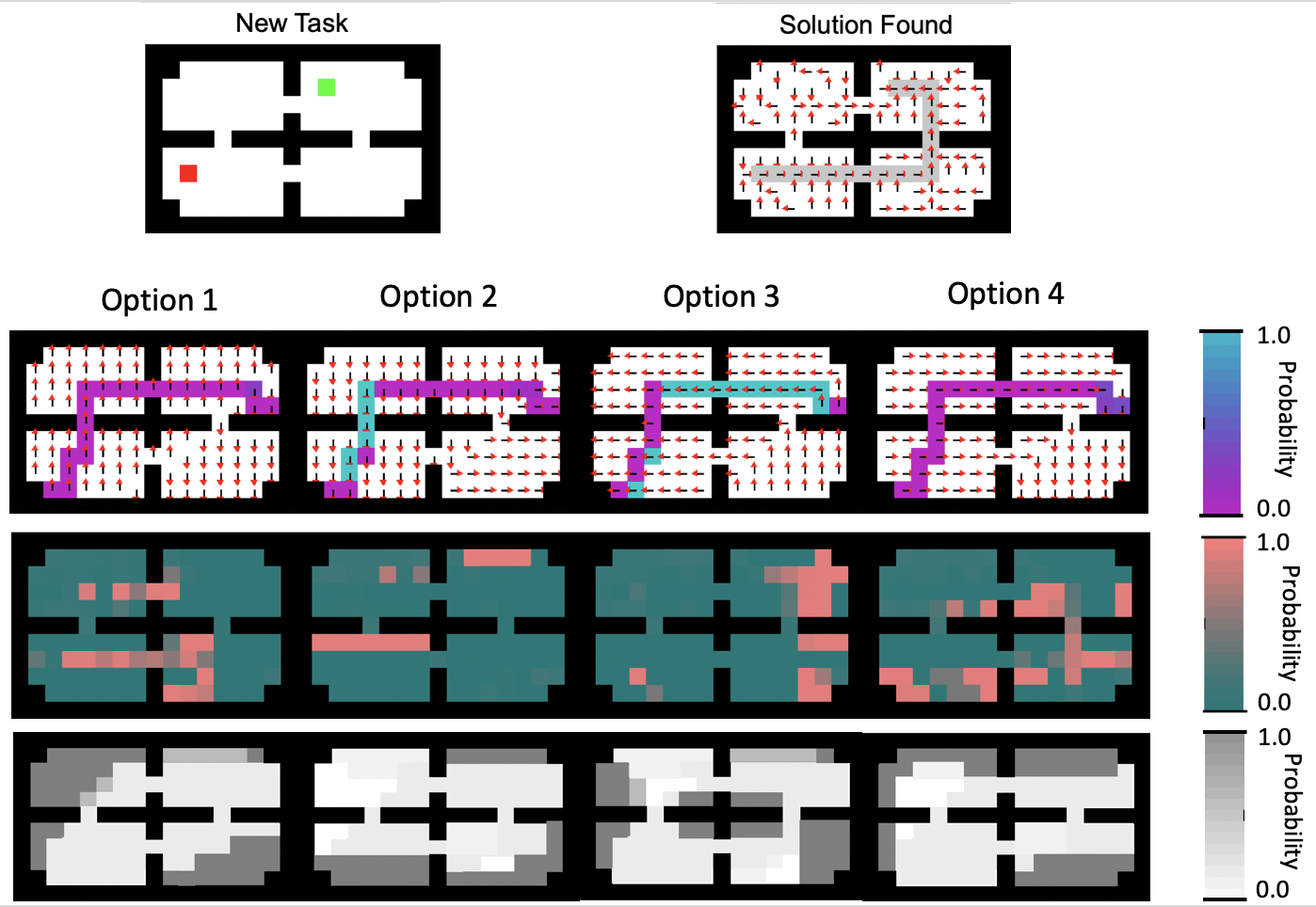}
    \caption{Visualization of our framework for learning options in four rooms domain. A novel task is seen in the top left, where the agent (red) has to navigate to a goal (green). On the top right, we show the solution found by the agent. The three rows below show how the options were learned and exploited in the new task. The highlighted area in the top row show a sample trajectory and the color corresponds to the probability that the option would take the demonstrated action. The middle row shows the probability of each option being executed at each state, while the bottom row shows the corresponding probability of termination.}
    \label{fig:options}
    \vspace{-3mm}
\end{figure*}

Figure \ref{fig:option-loss} shows the change in the average expected number of terminations and average probability of generating the observed trajectories while learning options, as new options are introduced and adapted to the sampled trajectories. Options were learned over the six sampled optimal trajectories and every $50$ epochs a new option was introduced to the option set, for a total of $4$ options.
For every new option, the change in probability of generating the observed trajectories as well as the change in expected number of decisions reaches a plateau after $30$ or $40$ training epochs. When a new option is introduced, there is a large jump in the loss because a new policy, $\pi$, is initialized arbitrarily to account for the new option set being evaluated. However, after training the new candidate option, the overall loss improves beyond what it was possible before introducing the new option.

In Figure \ref{fig:test-results}, we compare the performance of Q-learning on $24$ novel test tasks (randomly selected start and goal states) using options discovered from offline option learning (with and without regularization using KL divergence), eigenoptions, and option critic. We allowed each competing method to learn options from the same six training tasks and, to ensure a fair comparison, we used the original code provided by the authors. As baselines, we also compare against primitive actions and randomly initialized options.
It might seem surprising that both eigenoptions and the option-critic failed to reach an optimal policy when they were shown to work well in this type of problem; for that we offer the following explanation. Our implementation of four rooms is defined in a much larger state space than the ones where these methods were originally tested, making each individual room much larger. Since the options identified by these methods tend to lead the agent from room to room, it is possible that, once in the correct room, the agent executes an option leading to a different room before it had the opportunity to find the goal. When testing our approach in the smaller version of the four room problem, we found no clear difference to the performance of the competing methods. In this setting, the options learned by our method found an optimal policy in all testing tasks in the allotted number of episodes. We set the threshold $\Delta$ for introducing a new option to $10\%$ of $\hat J$ at the previous iteration and the hyper-parameter $\lambda_2=100.0$. When adding KL regularization, we set $\lambda_1 = 0.001$.


To understand the reason behind the improvement in performance resulting from offline option learning, we turn the reader's attention to Figure \ref{fig:options}. The figure is a visualization of the policy learned by the agent on a particular task: navigate from a specific location in the bottom-left room to a location in the top-right room in a small ``four-room'' domain of size $10\times15$. \footnote{We show a smaller domain than used in the experiments for ease of visualization}

The new task to solve is shown in the top-left figure, while the solution found is shown in the top-right figure. Each of the remaining rows of images shows how each option was learned and used in the new task.
The first row show the options learned after training; the highlighted path depicts one of the sample trajectories used for training, the colors correspond to the probability that the options would take the demonstrated action, and the arrows indicate the most likely action to be taken by the option.

The middle row depicts a heatmap indicating how each option was used to solve this specific task. It shows the probability that $\pi$ would execute each option at any given state. Finally, the last row depicts a heatmap indicating the probability of termination for each option given the state. 

Looking at the learned options from these different perspectives provides some insight into how they are being exploited. For example, option $1$ is generally useful to navigate towards the top rooms and, since the goal in this task is in the top-right room, the option is mainly called in the bottom rooms. Also notice that the option is likely to terminate in the top left and bottom right rooms in the states that would lead the agent to get ``trapped'' against a wall. These options, when used in combination in specific regions, allow the agent to efficiently tackle problems it has not encountered before.

\subsection{Experiments using Atari 2600 Games}
We evaluated the quality of the options learned by our framework in two different Atari 2600 games: Breakout and Amidar.
We trained the policy over options using A3C \cite{a3c} with grayscale images as inputs. Options were represented by a two layer convolutional neural network, and were given the previous two frames as input. For each task variation we randomly picked an integer between $1$ and $20$, and let the agent act randomly for that number of time-steps before learning (changing the initial state distribution), we sampled a number in the range $(0,10]$ and use it to scale the reward the agent received (changing the reward function), and allowed for a number of frames to be skipped after taking each action (changing the transition function).
For used three different tasks for training for each game, and sampled $12$ trajectories for training; we used five new tasks for testing. Each trajectory lasted until a life was lost, not for the entire duration of the episode. The options were represented by a two-layer neural network, where the input was represented by gray scale images of the last two frames. We ran $32$ training agents in parallel on CPUs, the learning rate was set to $0.0001$ and the discount factor $\gamma$ was set to $0.99$.

Figures \ref{fig:result-breakout} and \ref{fig:result-amidar} show the performance of the agent as a function of training time in Breakout and Amidar, respectively. The plots show that given good choices of hyperparameters, the learned options led to a clear improvement in performance during training.
For both domains, we found that $\lambda_2=5,000$ led to a reasonable trade-off between the first two term in $\hat J$, and report results with three different values for the regularization term: $\lambda_1=0.0$ (no regularization), $\lambda_1=0.01$ and $\lambda_1=0.1$. 
Note that our results do not necessarily show that the options result in a better final policy, but they improve exploration early in training and enable the agent to learn more effectively.

\begin{figure}
    \centering
    \includegraphics[width=1.1\linewidth]{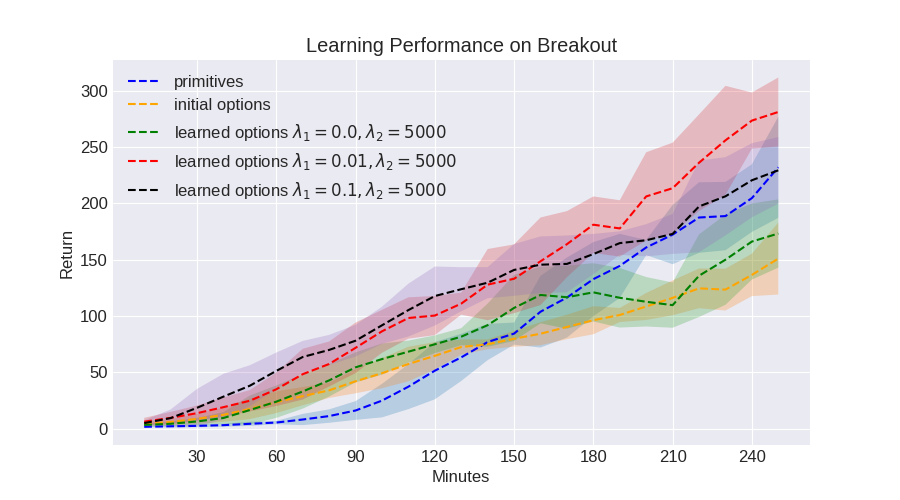}
    \caption{Average returns on Breakout comparing primitives (blue), options before training (orange) and learned options for different values of $\lambda_1$ and $\lambda_2$. The shaded region indicates the standard error.}
    \label{fig:result-breakout}
\end{figure}%
    
\begin{figure}
    \centering
    \includegraphics[width=1.1\linewidth]{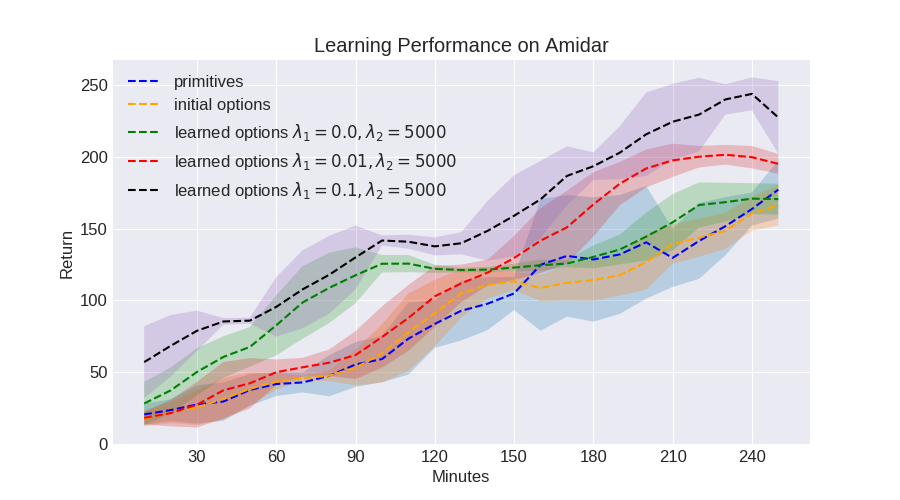}
    \caption{Average returns on Amidar comparing primitives (blue), options before training (orange) and learned options for different values of $\lambda_1$ and $\lambda_2$. The shaded region indicates the standard error.}
    \label{fig:result-amidar}
\end{figure}
    
%
%
%

\begin{figure}[h]
    \centering
    \includegraphics[width=\linewidth]{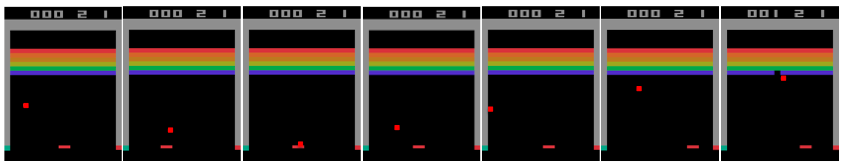}
    \caption{Visualization of a learned option executed until termination on Breakout. The option learned to catch the ball bouncing off the left wall and terminates with high probability before the ball bounces a wall again (ball size increased for visualization). }
    \label{fig:option_breakout}
\end{figure}
    
\begin{figure}[h]
    \includegraphics[width=\linewidth]{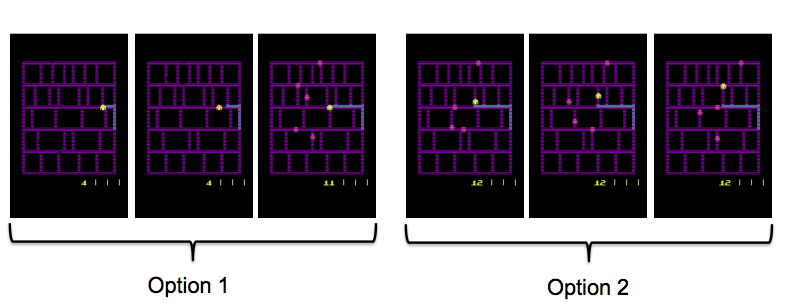}
    \caption{Visualization of two learned options on Amidar. The agent is shown in yellow and enemies in pink. Option 1 learned to move up, at the beginning of the game, and turn left until getting close to an intersection. Option 2 learned to turn in that intersection and move up until reaching the next one. }
    \label{fig:option_amidar}
\end{figure}

Figure \ref{fig:option_breakout} depicts the behavior for one of the learned options on Breakout. 
The option efficiently catches the ball after it bounces off the left wall, and then terminates with high probability before the ball has to be caught again. 
Bear in mind that the option remains active for many time-steps, significantly reducing the number of decisions made by the policy over options.
However, it does not maintain control for so long that the agent is unable to respond to changing circumstances.
Note that the option is only useful in specific case; for example, it was not helpful in returning a ball bounced off the right wall. 
That is to say, the option specialized in a specific sub-task within the larger problem: a highly desirable property for generally useful options.

Figure \ref{fig:option_amidar} shows the selection of two of the options learned for Amidar when starting a new game. 
At the beginning of the game, option 1 is selected, which takes the agent to a specific intersection before terminating.
The agent then selects option 2, which chooses a direction at the intersection, follows the resulting path, and terminates at the next intersection. Note that the agent does not need to repeatedly select primitive actions in order to simply follow a previously chosen path. Having access to these types of options enables an agent to easily replicate known good behaviors, allowing for faster and more meaningful exploration of the state space.

\section{Conclusion and Future Work}
\label{sec:conclusion}

In this work we presented an optimization objective for learning options from demonstrations obtained from learned policies on a set of tasks.
Optimizing the objective results in a set of options that allows an agent to reproduce the behavior while minimizing the number of decisions made by the policy over options, which are able to improve the learning ability of the agent on new tasks.

There are some clear directions for future development. 
While we have shown that our method is capable of discovering powerful options, properly tuning the hyperparameters, $\lambda_1$ and $\lambda_2$, is necessary for learning appropriate options. 
In complex environments, this is not an easy task. 
Future work could study methods for finding the right balance between hyperparameters automatically or, if possible, eliminate the need for such hyperparameters altogether. 
Another possible dimension of improvement is to study how to extend the proposed ideas to the online setting; an agent may be able to sample trajectories as it is learning a task and progressively use them to continuously improve its option set. 

We provided results showing how options adapt to the trajectories provided and showed, through several experiments, that the identified options are capable of significantly improving the learning ability of an agent.
The resulting options encode meaningful abstractions that help the agent interact with and learn from its environment more efficiently.

\bibliographystyle{named}
\bibliography{ijcai20}

\begin{thebibliography}{}

\bibitem[\protect\citeauthoryear{Ammar \bgroup \em et al.\egroup
  }{2015}]{quadrotor}
Haitham~Bou Ammar, Eric Eaton, Paul Ruvolo, and Matthew~E. Taylor.
\newblock Unsupervised cross-domain transfer in policy gradient reinforcement
  learning via manifold alignment.
\newblock In {\em Proceedings of the Twenty-Ninth AAAI Conference on Artificial
  Intelligence}, AAAI'15, pages 2504--2510. AAAI Press, 2015.

\bibitem[\protect\citeauthoryear{Bacon \bgroup \em et al.\egroup
  }{2017}]{option_critic}
Pierre-Luc Bacon, Jean Harb, and Doina Precup.
\newblock The option-critic architecture.
\newblock In {\em AAAI}, 2017.

\bibitem[\protect\citeauthoryear{Brockman \bgroup \em et al.\egroup
  }{2016}]{gym}
Greg Brockman, Vicki Cheung, Ludwig Pettersson, Jonas Schneider, John Schulman,
  Jie Tang, and Wojciech Zaremba.
\newblock Openai gym.
\newblock {\em CoRR}, 2016.

\bibitem[\protect\citeauthoryear{Farchy \bgroup \em et al.\egroup
  }{2013}]{peter_stone_run}
Alon Farchy, Samuel Barrett, Patrick MacAlpine, and Peter Stone.
\newblock Humanoid robots learning to walk faster: From the real world to
  simulation and back.
\newblock In {\em Proc. of 12th Int. Conf. on Autonomous Agents and Multiagent
  Systems (AAMAS)}, May 2013.

\bibitem[\protect\citeauthoryear{Finn \bgroup \em et al.\egroup }{2017}]{maml}
Chelsea Finn, Pieter Abbeel, and Sergey Levine.
\newblock Model-agnostic meta-learning for fast adaptation of deep networks.
\newblock In Doina Precup and Yee~Whye Teh, editors, {\em Proceedings of the
  34th International Conference on Machine Learning}, volume~70 of {\em
  Proceedings of Machine Learning Research}, pages 1126--1135, International
  Convention Centre, Sydney, Australia, 06--11 Aug 2017. PMLR.

\bibitem[\protect\citeauthoryear{Harb \bgroup \em et al.\egroup
  }{2018}]{option_critic_penalty}
Jean Harb, Pierre-Luc Bacon, Martin Klissarov, and Doina Precup.
\newblock When waiting is not an option: Learning options with a deliberation
  cost.
\newblock In {\em AAAI}, 2018.

\bibitem[\protect\citeauthoryear{Machado \bgroup \em et al.\egroup
  }{2017}]{eigen_option}
Marlos~C. Machado, Marc~G. Bellemare, and Michael Bowling.
\newblock {A Laplacian Framework for Option Discovery in Reinforcement
  Learning}.
\newblock {\em CoRR}, 2017.

\bibitem[\protect\citeauthoryear{Mahadevan}{2005}]{pvfs}
Sridhar Mahadevan.
\newblock Proto-value functions: Developmental reinforcement learning.
\newblock In {\em Proceedings of the 22nd International Conference on Machine
  Learning (ICML-2005)}, pages 553--560. ACM, 2005.

\bibitem[\protect\citeauthoryear{McGovern and Sutton}{1998}]{macro_tech}
A.~McGovern and R.~Sutton.
\newblock Macro actions in reinforcement learning: An empirical analysis.
\newblock Technical report, University of Massachusetts - Amherst,
  Massachusetts, USA, 1998.

\bibitem[\protect\citeauthoryear{Mnih \bgroup \em et al.\egroup }{2015}]{atari}
Volodymyr Mnih, Koray Kavukcuoglu, David Silver, Andrei~A. Rusu, Joel Veness,
  Marc~G. Bellemare, Alex Graves, Martin Riedmiller, Andreas~K. Fidjeland,
  Georg Ostrovski, Stig Petersen, Charles Beattie, Amir Sadik, Ioannis
  Antonoglou, Helen King, Dharshan Kumaran, Daan Wierstra, Shane Legg, and
  Demis Hassabis.
\newblock Human-level control through deep reinforcement learning.
\newblock {\em Nature}, 518(7540):529--533, February 2015.

\bibitem[\protect\citeauthoryear{Mnih \bgroup \em et al.\egroup }{2016}]{a3c}
Volodymyr Mnih, Adria~Puigdomenech Badia, Mehdi Mirza, Alex Graves, Timothy
  Lillicrap, Tim Harley, David Silver, and Koray Kavukcuoglu.
\newblock Asynchronous methods for deep reinforcement learning.
\newblock In Maria~Florina Balcan and Kilian~Q. Weinberger, editors, {\em
  Proceedings of The 33rd International Conference on Machine Learning},
  volume~48 of {\em Proceedings of Machine Learning Research}, pages
  1928--1937, New York, New York, USA, 20--22 Jun 2016. PMLR.

\bibitem[\protect\citeauthoryear{Schmidhuber \bgroup \em et al.\egroup
  }{1998}]{schmid2}
J\"{u}rgen Schmidhuber, Jieyu Zhao, and Nicol~N. Schraudolph.
\newblock Learning to learn.
\newblock chapter Reinforcement Learning with Self-modifying Policies, pages
  293--309. Kluwer Academic Publishers, Norwell, MA, USA, 1998.

\bibitem[\protect\citeauthoryear{Schmidhuber}{1995}]{schmid1}
Jürgen Schmidhuber.
\newblock On learning how to learn learning strategies.
\newblock Technical report, 1995.

\bibitem[\protect\citeauthoryear{Silver \bgroup \em et al.\egroup }{2016}]{go}
David Silver, Aja Huang, Chris~J. Maddison, Arthur Guez, Laurent Sifre, George
  van~den Driessche, Julian Schrittwieser, Ioannis Antonoglou, Veda
  Panneershelvam, Marc Lanctot, Sander Dieleman, Dominik Grewe, John Nham, Nal
  Kalchbrenner, Ilya Sutskever, Timothy Lillicrap, Madeleine Leach, Koray
  Kavukcuoglu, Thore Graepel, and Demis Hassabis.
\newblock Mastering the game of {Go} with deep neural networks and tree search.
\newblock {\em Nature}, 529(7587):484--489, January 2016.

\bibitem[\protect\citeauthoryear{Sutton and
  Precup}{1998}]{Sutton98intra-optionlearning}
Richard~S. Sutton and Doina Precup.
\newblock Intra-option learning about temporally abstract actions.
\newblock In {\em In Proceedings of the 15th International Conference on
  Machine Learning (ICML-1998)}, 1998.

\bibitem[\protect\citeauthoryear{Sutton \bgroup \em et al.\egroup
  }{1999}]{smdp}
Richard~S. Sutton, Doina Precup, and Satinder~P. Singh.
\newblock Between mdps and semi-mdps: {A} framework for temporal abstraction in
  reinforcement learning.
\newblock {\em Artificial Intelligence}, 1999.

\bibitem[\protect\citeauthoryear{Taylor and Stone}{2009}]{tl_rl}
Matthew~E. Taylor and Peter Stone.
\newblock Transfer learning for reinforcement learning domains: A survey.
\newblock {\em J. Mach. Learn. Res.}, 10:1633--1685, December 2009.

\bibitem[\protect\citeauthoryear{Taylor \bgroup \em et al.\egroup
  }{2007}]{qtransfer}
Matthew~E. Taylor, Peter Stone, and Yaxin Liu.
\newblock Transfer learning via inter-task mappings for temporal difference
  learning.
\newblock {\em J. Mach. Learn. Res.}, 8:2125--2167, December 2007.

\bibitem[\protect\citeauthoryear{Tesauro}{1995}]{tdgammon}
Gerald Tesauro.
\newblock Temporal difference learning and td-gammon.
\newblock {\em Commun. ACM}, 38(3):58--68, March 1995.

\end{thebibliography}

\appendix

\clearpage

\onecolumn

\section{Appendix}

The following list defines the notation used in all derivations:

\begin{enumerate}
    \item $A_t$: random variable denoting action taken at step $t$.
    \item $S_t$: random variable denoting state at step $t$.
    \item $H_t$: random variable denoting history up to step $t$. $H_t = (S_0, A_0, S_1, A_1, \dots, S_t)$.
    \item $T_t$: random variable denoting the event that the option used at step $t-1$ terminates at state $S_t$. 
    \item $\pi$: policy over options.
    \item $P$: transition function. $P(s,a,s')$ denotes the probability of transitioning to state $s'$ by taking action $a$ in state $s$  
    \item $O_t$: random variable denoting the option selected for execution at state $S_t$.
    \item $o$: option defined as $o = (\mu_o, \beta_o)$, where $\mu_o$ is the option policy for option and $\beta_o$ is the termination function.
    \item Assume primitives are options that perform only 1 action and last for 1 time-step.
    \item $\mathcal{O}$: set of available options.
\end{enumerate}

We can compute the probability of an option terminating at state $s_t$ and generating a trajectory $h_t$ as:
\begin{equation}
\begin{aligned}
\Pr(T_t=1,H_t=h_t|\pi, \mathcal{O})  =
\Pr(T_t=1| H_t=h_t,\pi, \mathcal{O})  \Pr(H_t=h_t |\pi, \mathcal{O}) 
\end{aligned}
\end{equation}

To compute the proposed objective $J$ we need to find an expression for $\Pr(T_t=1| H_t=h_t,\pi, \mathcal{O})$ and \linebreak $\Pr(H_t=h_t |\pi, \mathcal{O})$ in terms of known quantities.

\subsection{Appendix A - Derivation of $\hat J$}

Recall $J(\pi, \mathcal{O}, H) = \mathbf{E}\left[ \sum_{t=1}^{|h|} \Pr(T_t=0,H_t| \pi, \mathcal{O}) \right]$, ignoring the regularization term. Assuming access to a set $\mathcal H$ of sample trajectories, we start by estimating $J$ from sample averages and derive the objective $\hat J$ as follows:

$
\begin{aligned}
J(\pi, \mathcal{O}, \mathcal{H}) \approx & \frac{1}{|\mathcal H|} \sum_{h \in \mathcal H} \sum_{t=1}^{|h|} \Pr(T_t=0,H_t=h_t|\pi, \mathcal{O})  \\
=&\frac{1}{|\mathcal H|} \sum_{h \in \mathcal H} \sum_{t=1}^{|h|}
\big(1- \Pr(T_t=1| H_t=h_t,\pi, \mathcal{O}) \big)  \Pr(H_t=h_t |\pi, \mathcal{O}) \\
=&\frac{1}{|\mathcal H|} \sum_{h \in \mathcal H} \sum_{t=1}^{|h|}
\bigg(1 - \mathbf{E} \big[T_t| H_t=h_t,\pi, \mathcal{O} \big]\bigg)  \Pr(H_t=h_t |\pi, \mathcal{O}) 
\end{aligned}
$

It can easily be seen that to maximize the above expression $\mathbf{E} \big[T_t| H_t=h_t,\pi, \mathcal{O} \big]$ should be minimized while $\Pr(H=h |\pi, \mathcal{O})$ should be maximized. Given that for long trajectories the expected number of terminations increases while the probability of generating the trajectories goes to $0$, we normalize the number of terminations by the lenght of the trajectory, $|h|$, and adjust a hyperparameter, $\lambda_2$, to prevent one term from dominating the other during optimization. Based on this observation we propose optimizing the following objective:
\begin{equation}
\begin{aligned}
\hat{J}(\pi, \mathcal{O}, \mathcal{H}) =  \frac{1}{\mathcal{H}} \sum_{h_{|h|} \in \mathcal H} \lambda_2 \Pr(H=h | \pi, \mathcal{O}) -  \frac{ \sum_{t=1}^{|h|} \mathbf{E} \left[ T_t \middle | H_t=h_t, \pi, \mathcal{O} \right] }{|h|}. \nonumber
\end{aligned}
\end{equation}

This objective allow us to control a trade-off, through $\lambda_2$, of how much we care about the options reproducing the demonstrated trajectories vs. how much we want the agent to minimize the number of decisions.

\subsection{Appendix B - Proof of Theorem 1}

\textbf{Theorem 1}
\textit{Given a set of options $\mathcal{O}$ and a policy $\pi$ over options, the expected number of terminations for a trajectory $h$ of length $|h|$ is given by:}

\begin{align*}
\sum_{t=1}^{|h|} \mathbf{E} \left[ T_t=1 \middle | H_t=h_t, \pi, \mathcal{O} \right]  
    =& \sum_{t=1}^{|h|} \bigg( \sum_{o \in \mathcal O} \beta_o(s_t) \frac{\mu_o(s_{t-1},a_{t-1}) \Pr(O_{t-1}=o|H_{t-1}=h_{t-1},\pi, \mathcal{O})}{ \sum_{o' \in \mathcal O} \mu_{o'}(s_{t-1},a_{t-1}) \Pr(O_{t-1}=o'|H_{t-1}=h_{t-1},\pi, \mathcal{O})} \bigg) ,  \nonumber
\end{align*}

\textit{where},
\begin{align}
\Pr(O_{t-1}=o|H_{t-1}=h_{t-1},\pi, \mathcal{O}) = \bigg[ \bigg( \pi(s_{t-1}, o) \beta_o(s_{t-1}) \bigg)  \bigg( \nonumber P(s_{t-2},a_{t-2},s_{t-1}) \mu_o(s_{t-2},a_{t-2}) \\ \nonumber
\times \Pr(O_{t-2}=o|H_{t-2}=h_{t-2},\pi, \mathcal{O}) (1-\beta_o(s_{t-1})) \bigg) \bigg],
\end{align}
and 
$\Pr(O_0=o|H_0=h_0,\pi, \mathcal{O}) = \pi(s_0,o)$.
\begin{proof}
Notice that $\sum_{t=1}^{|h|} \mathbf{E} \left[ T_t=1 \middle | H_t=h_t, \pi, \mathcal{O} \right] = \sum_{t=1}^{|h|} \Pr(T_t=1|H_t=h_t, \pi, \mathcal{O}) \; 1$, so if we find an expression for $\Pr(T_t=1|H_t=h_t, \pi, \mathcal{O})$, we can calculate the expectation exactly. 
We define $\Pr(T_0=1|H_1=h_1,\pi, \mathcal{O})=1$ for ease of derivation even though there is no option to terminate at $T_0$.

\begin{align*}
    \Pr(T_t = 1| H_t = h_t, \pi, \mathcal{O}) =& \sum_{o \in \mathcal O} \Pr(T_t = 1 | O_{t-1} = o, H_t = h_t, \pi, \mathcal{O}) \Pr(O_{t-1} = o|H_t = h_t, \pi, \mathcal{O}) \\
    =& \sum_{o \in \mathcal O} \beta_o(s_t) \Pr(O_{t-1} = o|H_t = h_t, \pi, \mathcal{O}) \\
    =& \sum_{o \in \mathcal O} \beta_o(s_t) \Pr(O_{t-1}=o|H_{t-1}=h_{t-1},A_{t-1}=a_{t-1},S_t=s_t,\pi, \mathcal{O}) \\
    =& \sum_{o \in \mathcal O} \beta_o(s_t) \frac{\Pr(S_t=s_t|H_{t-1}=h_{t-1},A_{t-1}=a_{t-1},O_{t-1}=o,\pi, \mathcal{O}) }{\Pr(S_t=s_t|H_{t-1}=h_{t-1},A_{t-1}=a_{t-1},\pi, \mathcal{O})} \\ 
    &\times \Pr(O_{t-1}=o|H_{t-1}=h_{t-1},A_{t-1}=a_{t-1},\pi, \mathcal{O}) \\
    =& \sum_{o \in \mathcal O} \beta_o(s_t)  \frac{\Pr(S_t=s_t|H_{t-1}=h_{t-1},A_{t-1}=a_{t-1},\pi, \mathcal{O})}{\Pr(S_t=s_t|H_{t-1}=h_{t-1},A_{t-1}=a_{t-1},\pi, \mathcal{O})} \\
    &\times \Pr(O_{t-1}=o|H_{t-1}=h_{t-1},A_{t-1}=a_{t-1},\pi, \mathcal{O})\\
    =& \sum_{o \in \mathcal O} \beta_o(s_t) \Pr(O_{t-1}=o|H_{t-1}=h_{t-1},A_{t-1}=a_{t-1},\pi, \mathcal{O}) \\
    =& \sum_{o \in \mathcal O} \beta_o(s_t) \frac{\Pr(A_{t-1}=a_{t-1}|H_{t-1}=h_{t-1},O_{t-1}=o,\pi, \mathcal{O}) \Pr(O_{t-1}=o|H_{t-1}=h_{t-1},\pi, \mathcal{O})}{\Pr(A_{t-1}=a_{t-1}|H_{t-1}=h_{t-1},\pi, \mathcal{O})} \\
    =& \sum_{o \in \mathcal O} \beta_o(s_t) \frac{\mu_o(s_{t-1},a_{t-1}) \Pr(O_{t-1}=o|H_{t-1}=h_{t-1},\pi, \mathcal{O})}{\Pr(A_{t-1}=a_{t-1}|H_{t-1}=h_{t-1},\pi, \mathcal{O})} \\
    \displaybreak\\
    =& \sum_{o \in \mathcal O} \beta_o(s_t) \frac{\mu_o(s_{t-1},a_{t-1}) \Pr(O_{t-1}=o|H_{t-1}=h_{t-1},\pi, \mathcal{O})}{ \sum_{o' \in \mathcal O} \Pr(A_{t-1}=a_{t-1},O_{t-1}=o'|H_{t-1}=h_{t-1},\pi, \mathcal{O})} \\
    =& \sum_{o \in \mathcal O} \beta_o(s_t) \mu_o(s_{t-1},a_{t-1}) \Pr(O_{t-1}=o|H_{t-1}=h_{t-1},\pi, \mathcal{O}) \\
    &\times \big( \sum_{o' \in \mathcal O} \Pr(A_{t-1}=a_{t-1}|O_{t-1}=o',H_{t-1}=h_{t-1},\pi, \mathcal{O}) \\
    &\times \Pr(O_{t-1}=o'|H_{t-1}=h_{t-1},\pi, \mathcal{O}) \big)^{-1} \\
    =& \sum_{o \in \mathcal O} \beta_o(s_t) \frac{\mu_o(s_{t-1},a_{t-1}) \Pr(O_{t-1}=o|H_{t-1}=h_{t-1},\pi, \mathcal{O})}{ \sum_{o' \in \mathcal O} \mu_{o'}(s_{t-1},a_{t-1}) \Pr(O_{t-1}=o'|H_{t-1}=h_{t-1},\pi, \mathcal{O})}
\end{align*}

We are left with finding an expression in terms of known probabilities for $\Pr(O_{t-1}=o|H_{t-1}=h_{t-1},\pi, \mathcal{O})$.

\begin{align*}
    \Pr(O_{t-1}=o|H_{t-1}=h_{t-1},\pi, \mathcal{O})=&  \big[ \Pr(O_{t-1}=o, T_{t-1}=1|H_{t-1}=h_{t-1},\pi, \mathcal{O}) \\
    &+ \Pr(O_{t-1}=o, T_{t-1}=0|H_{t-1}=h_{t-1},\pi, \mathcal{O}) \big] \\
    =& \bigg[ \big( \Pr(O_{t-1}=o |H_{t-1}=h_{t-1},T_{t-1}=1,\pi, \mathcal{O}) \\
    &\times \Pr(T_{t-1}=1|H_{t-1}=h_{t-1},\pi, \mathcal{O}) \big)\\
    &+ \big( \Pr(O_{t-1}=o|H_{t-1}=h_{t-1}, T_{t-1}=0,\pi, \mathcal{O}) \\
    &\times (1-\Pr(T_{t-1}=1|H_{t-1}=h_{t-1},\pi, \mathcal{O})) \big) \bigg] \\
    =& \bigg[ \big( \pi(s_{t-1}, o) \Pr(T_{t-1}=1|H_{t-1}=h_{t-1},\pi, \mathcal{O}) \big)\\
    &+ \big( \Pr(O_{t-1}=o|H_{t-1}=h_{t-1}, T_{t-1}=0,\pi, \mathcal{O}) \\
    &\times (1-\Pr(T_{t-1}=1|H_{t-1}=h_{t-1},\pi, \mathcal{O})) \big) \bigg] \\
    =& \bigg[ \big( \pi(s_{t-1}, o) \beta_o(s_{t-1}) \big) + \\
    &\times \big( \Pr(O_{t-1}=o|H_{t-1}=h_{t-1}, T_{t-1}=0,\pi, \mathcal{O}) (1-\beta_o(s_{t-1})) \big) \bigg] \\
\end{align*}

Given that by convention, $\Pr(T_0=1|H_0=h_0,\pi, \mathcal{O})=1.0$, we are now left with figuring out how to calculate $\Pr(O_{t-1}=o|H_{t-1}=h_{t-1}, T_{t-1}=0,\pi, \mathcal{O})$

\begin{align*}
\Pr(O_{t-1}=o|H_{t-1}=h_{t-1}, T_{t-1}=0,\pi, \mathcal{O}) =& \Pr(O_{t-2}=o,A_{t-2}=a_{t-2},S_{t-1}=s_{t-1}|H_{t-1}=h_{t-1},\pi, \mathcal{O}) \\
    =& \Pr(A_{t-2}=a_{t-2},S_{t-1}=s_{t-1}|O_{t-2}=o,H_{t-1}=h_{t-1},\pi, \mathcal{O}) \\ &\times \Pr(O_{t-2}=o|H_{t-1}=h_{t-1},\pi, \mathcal{O}) \\
    =& \Pr(S_{t-1}=s_{t-1}|A_{t-2}=a_{t-2},O_{t-2}=o,H_{t-1}=h_{t-1},\pi, \mathcal{O}) \\
    &\times \Pr(A_{t-2}=a_{t-2}|O_{t-2}=o,H_{t-1}=h_{t-1},\pi, \mathcal{O}) \\ &\times \Pr(O_{t-2}=o|H_{t-1}=h_{t-1},\pi, \mathcal{O}) \\
    =& P(s_{t-2},a_{t-2},s_{t-1}) \mu_o(s_{t-2},a_{t-2}) \Pr(O_{t-2}=o|H_{t-1}=h_{t-1},\pi, \mathcal{O}) \\
    =& P(s_{t-2},a_{t-2},s_{t-1}) \mu_o(s_{t-2},a_{t-2}) \Pr(O_{t-2}=o|H_{t-2}=h_{t-2},\pi, \mathcal{O}) \\
\end{align*}

where $\Pr(O_0=o|H_0=h_0,\pi, \mathcal{O}) = \pi(s_0,o)$

Using the recursive function $\Pr(O_{t-1}=o'|H_{t-1}=h_{t-1},\pi, \mathcal{O})$, the expected number of terminations for a given trajectory is given by:
$$
\begin{aligned}
\sum_{t=1}^{|h|} \mathbf{E} \left[ T_t=1 \middle | H_t=h_t, \pi, \mathcal{O} \right]  
    =& \sum_{t=1}^{|h|} \bigg( \sum_{o \in \mathcal O} \beta_o(s_t) \frac{\mu_o(s_{t-1},a_{t-1}) \Pr(O_{t-1}=o|H_{t-1}=h_{t-1},\pi, \mathcal{O})}{ \sum_{o' \in \mathcal O} \mu_{o'}(s_{t-1},a_{t-1}) \Pr(O_{t-1}=o'|H_{t-1}=h_{t-1},\pi, \mathcal{O})} \bigg) ,  \nonumber
\end{aligned}
$$
\end{proof}

\subsection{Appendix C - Proof of Theorem 2}

\textbf{Theorem 2}

\textit{Given a set of options $\mathcal{O}$ and a policy $\pi$ over options, the probability of generating a trajectory $h$ of length $|h|$ is given by:}

$\Pr(H_{|h|}=h_{|h|} |\pi, \mathcal{O}) =  d_0(s_0) \bigg[ \sum_{o \in \mathcal{O}} \pi(s_0,o) \mu_o(s_0,a_0) f(h_{|h|},o,1) \bigg] \prod_{k=0}^{|h|-1} P(s_k,a_k,s_{k+1})$, \textit{where $f$ is a recursive function defined as:}

\[
  f(h_t, o, i) = 
  \begin{cases}
    1, & \text{if } i = t \\
    \bigg[ \beta_o(s_i) \sum_{o' \in \mathcal{O}} \pi(s_{i+1},o') \mu_{o'}(s_{i+1}, a_{i+1}) f(h_t,o',i+1)  \\
    + (1-\beta_o(s_i)) \mu_o(s_{i+1},a_{i+1}) f(h_t,o,i+1) \bigg], & \text{otherwise}
  \end{cases}
\]

\begin{proof}

We define $H_{i,t}$ to be the history from time $i$ to time $t$, that is, $H_{i,t} = (S_i, A_i, S_{i+1}, A_{i+1}, \dots, S_t$), where $i < t$. If $i = t$, the history would contain a single state. 

\begin{align*}
    \Pr(H_t = h_t|\pi, \mathcal{O}) =& \Pr(S_0 = s_0|\pi, \mathcal{O}) \Pr(H_{1,t} = h_{1,t}, A_0 = a_0 | S_0 = s_0, \pi, \mathcal{O}) \\
    =& d_0(s_0) \Pr(H_{1,t} = h_{1,t}, A_0 = a_0 | S_0 = s_0, \pi, \mathcal{O}) \\
    =& d_0(s_0) \sum_{o \in \mathcal O}  \Pr(H_{1,t} = h_{1,t}, A_0 = a_0, O_o=o | S_0 = s_0, \pi, \mathcal{O}) \\
    =& d_0(s_0) \sum_{o \in \mathcal O}  \Pr(O_0 = o | S_0 = s_0, \pi, \mathcal{O}) \Pr(H_{1,t} = h_{1,t}, A_0 = a_0 | S_0 = s_0, O_0 = o, \pi, \mathcal{O}) \\
    =& d_0(s_0) \sum_{o \in \mathcal O}  \pi_(s_0, o) \Pr(H_{1,t} = h_{1,t}, A_0 = a_0 | S_0 = s_0, O_0 = o, \pi, \mathcal{O}) \\
    =& d_0(s_0) \sum_{o \in \mathcal O}  \pi_(s_0, o) \Pr(A_0 = a_o | S_0 = s_0, O_0 = o, \pi, \mathcal{O}) \\
    &\times \Pr(H_{1,t} = h_{1,t} | S_0 = s_0, O_0 = o, A_0 = a_0, \pi, \mathcal{O}) \\
    =& d_0(s_0) \sum_{o \in \mathcal O}  \pi_(s_0, o) \mu_o(s_0, a_o) \Pr(H_{1,t} = h_{1,t} | S_0 = s_0, O_0 = o, A_0 = a_0, \pi, \mathcal{O}).
\end{align*}

We now need to find an expression to calculate $\Pr(H_{1,t} = h_{1,t} | S_0 = s_0, O_0 = o, A_0 = a_0, \pi, \mathcal{O})$. Consider the probability of seeing history $h_{i, t}$ given the previous state, $s$, the previous option, $o$, and the previous action, $a$:

\begin{align*}
     &\Pr(H_{i, t} = h_{i, t} | S_{i-1} = s, O_{i - 1} = o, A_{i-1} = a) \\
     =& \Pr(S_i = s_i | S_{i-1} = s, O_{i - 1} = o, A_{i-1} = a) \Pr(H_{i+1, t} = h_{i+1,t}, A_i = a_i | S_{i-1}=s, O_{i - 1} = o, A_{i-1} = a, S_i = s_i) \\
    =& P(s, a, s_i) \Pr(H_{i+1, t} = h_{i+1,t}, A_i = a_i | S_{i-1}=s, O_{i - 1} = o, A_{i-1} = a, S_i = s_i) \\
    =& P(s, a, s_i) \Pr(H_{i+1, t} = h_{i+1,t}, A_i = a_i | O_{i - 1} = o, A_{i-1} = a, S_i = s_i) \\
    =& P(s, a, s_i) \big [
        \Pr(T_i = 1 | O_{i - 1} = o, A_{i-1} = a, S_i = s_i) \\
        &\times \Pr(H_{i+1, t} = h_{i+1,t}, A_i = a_i | O_{i - 1} = o, A_{i-1} = a, S_i = s_i, T_i = 1) \\
    &+ \Pr(T_i = 0 | O_{i - 1} = o, A_{i-1} = a, S_i = s_i) \\
    &\times \Pr(H_{i+1, t} = h_{i+1,t}, A_i = a_i | O_{i - 1} = o, A_{i-1} = a, S_i = s_i, T_i = 0)
    \big ] \\
    =& P(s, a, s_i) \big [
        \beta_o(s_i) \\
        &\times \Pr(H_{i+1, t} = h_{i+1,t}, A_i = a_i | O_{i - 1} = o, A_{i-1} = a, S_i = s_i, T_i = 1) \\
    &+ (1 - \beta_o(s_i)) \\
    &\times \Pr(H_{i+1, t} = h_{i+1,t}, A_i = a_i | O_{i - 1} = o, A_{i-1} = a, S_i = s_i, T_i = 0)
    \big ].
\end{align*}

Even though the equation above might seem complicated, there are only two cases we need to consider: either the current option terminates and a new one must be selected (the first term), or the current option does not terminate (the second term).
Let's consider each of them separately.

\textbf{Case 1 - option terminates:} If we terminate, we sum over new options:

\begin{align*}
    &\Pr(H_{i+1, t} = h_{i+1,t}, A_i = a_i | O_{i - 1} = o, A_{i-1} = a, S_i = s_i, T_i = 1) \\
    =& \sum_{o' \in \mathcal O} \Pr(O_i = o' | O_{i - 1} = o, A_{i-1} = a, S_i = s_i, T_i = 1) \\ &\times \Pr(H_{i+1, t} = h_{i+1,t}, A_i = a_i | O_{i - 1} = o, A_{i-1} = a, S_i = s_i, T_i = 1, O_i = o') \\
    =& \sum_{o' \in \mathcal O} \pi(s_i, o')  \Pr(H_{i+1, t} = h_{i+1,t}, A_i = a_i | O_{i - 1} = o, A_{i-1} = a, S_i = s_i, T_i = 1, O_i = o') \\
    =& \sum_{o' \in \mathcal O} \pi(s_i, o')  \Pr(H_{i+1, t} = h_{i+1,t}, A_i = a_i | S_i = s_i, O_i = o') \\
    =& \sum_{o' \in \mathcal O} \pi(s_i, o')  \Pr(A_i = a_i | S_i = s_i, O_i = o') \Pr(H_{i+1, t} = h_{i+1,t} | S_i = s_i, O_i = o', A_i = a_i) \\
    =& \sum_{o' \in \mathcal O} \pi(s_i, o')  \mu_{o'}(s_i, a_i) \Pr(H_{i+1, t} = h_{i+1,t} | S_i = s_i, O_i = o', A_i = a_i). \\
\end{align*}

Note that the expanded probability has the same form as $\Pr(H_{i, t} = h_{i, t} | S_{i-1} = s, O_{i - 1} = o, A_{i-1} = a)$.

\textbf{Case 2 - option does not terminate:} 
This tells us that $O_i = o$, so we may drop the dependency on the $i-1$ terms:

\begin{align*}
    &\Pr(H_{i+1, t} = h_{i+1,t}, A_i = a_i | S_{i-1} = s, O_{i - 1} = o, A_{i-1} = a, S_i = s_i, T_i = 0) \\
    =& \Pr(H_{i+1, t} = h_{i+1,t}, A_i = a_i | S_i = s_i, O_i = o) \\
    =& \Pr(A_i = a_i | S_i = s_i, O_i = 0) \Pr(H_{i+1, t} = h_{i+1,t} | S_i = s_i, O_i = o, A_i = a_i) \\
    =& \mu_o(s_i, a_i) \Pr(H_{i+1, t} = h_{i+1,t} | S_i = s_i, O_i = o, A_i = a_i).
\end{align*}

Plugging these two cases back into our earlier equation yields:

\begin{align*}
     &\Pr(H_{i, t} = h_{i, t} | S_{i-1} = s, O_{i - 1} = o, A_{i-1} = a) \\ 
    =& P(s, a, s_i) \big [
        \beta_o(s_i) \sum_{o' \in \mathcal O} \pi(s_i, o')  \mu_{o'}(s_i, a_i) \Pr(H_{i+1, t} = h_{i+1,t} | S_i = s_i, O_i = o', A_i = a_i) \\
    &+ (1 - \beta_o(s_i)) \mu_o(s_i, a_i) \Pr(H_{i+1, t} = h_{i+1,t} | S_i = s_i, O_i = o, A_i = a_i) \big].
\end{align*}

Note that each term contains an expression of the same form, $\Pr(H_{i, t} = h_{i, t} | S_{i-1} = s, O_{i - 1} = o, A_{i-1} = a)$.
We can therefore compute the probability recursively. 
Our recursion will terminate when we consider $i = t$, as $H_{t, t}$ contains a single state, and we adopt the convention of its probability to be 1.
Notice that for every recursive step, both inner terms will produce a $P(s, a, s_i)$ term.
Consider the result when we factor every recursive $P(s, a, s_i)$ term to the front of the equation.
We define the following recursive function:
\[
  f(h_t, o, i) = 
  \begin{cases}
    1, & \text{if } i = t \\
    \bigg[ \beta_o(s_i) \sum_{o' \in \mathcal{O}} \pi(s_{i+1},o') \mu_{o'}(s_{i+1}, a_{i+1}) f(h_t,o',i+1)  \\
    + (1-\beta_o(s_i)) \mu_o(s_{i+1},a_{i+1}) f(h_t,o,i+1) \bigg], & \text{otherwise}
  \end{cases}
\].

Notice that this is the recursive probability described above, but with the $P(s, a, s')$ terms factored out.
We now see that:

\begin{align*}
     &\Pr(H_{i, t} = h_{i, t} | S_{i-1} = s_{i-1}, O_{i - 1} = o, A_{i-1} = a_{i - 1}) = f(h_t, o, i) \prod_{k = i - 1}^{t-1} P(s_k, a_k, s_{k + 1}).
\end{align*}

Plugging this all the back into our original equation for $\Pr(H_t = h_t | \pi, \mathcal{O})$ gives us the desired result:

\begin{align*}
    \Pr(H_{|h|}=h_{|h|} | \pi, \mathcal{O}) = d_0(s_0) \bigg[ \sum_{o \in \mathcal O}  \pi(s_0, o) \mu_o(s_0, a_o) f(h_{|h|}, o, 1) \bigg] \prod_{k = 0}^{t-1} P(s_k, a_k, s_{k + 1}).
\end{align*}

\end{proof}

\subsection{Appendix D - Empirical Validation of Derived Equations}

To double check the derivation of the proposed objective and make sure the implementation was correct, we conducted a simple empirical test to compared the calculated expected number of decisions in a trajectory and the probability of generating each trajectory for a set of $10$ trajectories on $10$ MDPs. The MDPs are simple chains of $7$ states with different transition functions.
We randomly initialized four options and a policy over options, and estimated the probability of generating each trajectory and the expected number of terminations, for each sampled trajectory, by Montecarlo sampling $10,000$ trials. Table \ref{table:validation} presents results for the $10$ trajectories verifying empirically that the equations were correctly derived and implemented. The table compares the empirical and true probability of generating a given trajectory, $\hat{\Pr}(H| \cdot )$ and $\Pr(H| \cdot )$, respectively, and the empirical and true sum  of expected number of decisions an agent has to make to generate those trajectories, $\sum_{t=1}^{|H|} \hat{\mathbf{E}} \left[ T_t \middle | \cdot \right]$ and $\sum_{t=1}^{|H|} \mathbf{E} \left[ T_t \middle | \cdot \right]$, respectively.

\small
\begin{table}[H]
\begin{center}
\begin{tabular}{ |c|c|c|c|c| } 
\hline
 H &  $\hat{\Pr}(H|\pi,\mathcal{O})$  & $\Pr(H|\pi,\mathcal{O})$ & $\sum_{t=1}^{|H|} \hat{\mathbf{E}} \left[ T_t \middle | H_t, \pi,\mathcal{O} \right]$ & $\sum_{t=1}^{|H|} \mathbf{E} \left[ T_t \middle | H_t, \pi,\mathcal{O} \right]$ \\
\hline
$h_1$ & $0.0932$ & $0.0957$ & $3.060$ & $3.178$ \\
\hline
$h_2$ & $0.0158$ & $0.0173$ & $4.139$ & $4.154$ \\
\hline
$h_3$ & $0.2149$ & $0.2122$ & $1.965$ & $2.178$    \\
\hline
$h_4$ & $0.0995$ & $0.0957$ & $2.979$ & $3.178$    \\
\hline
$h_5$ & $0.0962$ & $0.0957$ & $3.024$ & $3.178$    \\
\hline
$h_6$ & $0.1354$ & $0.1384$ & $2.9579$ & $3.1596$    \\
\hline
$h_7$ & $0.00040$ & $0.00038$ & $9.750$ & $8.794$    \\
\hline
$h_8$ & $0.1854$ & $0.1881$ & $2.820$ & $3.072$    \\
\hline
$h_9$ & $0.0379$ & $0.0368$ & $4.2612$ & $4.4790$    \\
\hline
$h_{10}$ & $0.1864$ & $0.1881$ & $2.8404$ & $3.0723$    \\
\hline
\end{tabular}
\end{center}
\caption{Validation of equations and implementation.}
\label{table:validation}
\end{table}
\normalsize

Note that the cases with largest discrepancy between the estimated and calculated number of terminations occur when the probability of generating a trajectory is low. This happens because, since the trajectory is unlikely to be generated, the Monte Carlo sampling is not able to produce enough samples of the trajectory.

%
%

\end{document}